\documentclass[pdflatex,sn-mathphys-num]{sn-jnl}

\usepackage{graphicx}%
\usepackage{multirow}%
\usepackage{amsmath,amssymb,amsfonts}%
\usepackage{amsthm}%
\usepackage{mathrsfs}%
\usepackage[title]{appendix}%
\usepackage{xcolor}%
\usepackage{textcomp}%
\usepackage{manyfoot}%
\usepackage{booktabs}%
\usepackage{algorithm}%
\usepackage{algorithmicx}%
\usepackage{algpseudocode}%
\usepackage{listings}%
\usepackage{wrapfig}
\usepackage{hyperref}
\usepackage{soul}
\usepackage[normalem]{ulem}
\usepackage{adjustbox}
\usepackage{mathrsfs}

\theoremstyle{thmstyleone}%
\newtheorem{theorem}{Theorem}
\newtheorem{proposition}[theorem]{Proposition}%
\newtheorem{assumption}{Assumption}
\newtheorem{corollary}{Corollary}
\newtheorem{lemma}{Lemma}

\theoremstyle{thmstyletwo}%

\theoremstyle{thmstylethree}%
\newtheorem{definition}{Definition}%

\begin{document}


\title{Improving Category-Aware Active Domain Adaptation via Uncertainty-Guided Region Coverage}



\author[1]{\fnm{My} \sur{Tra Nguyen Huynh}}\email{mynht@vnu.edu.vn}

\author*[1]{\fnm{Viet Cuong} \sur{Ta}}\email{cuongtv@vnu.edu.vn}

\affil[1]{\orgname{VNU University of Engineering and Technology}, \city{Hanoi}, \postcode{100000}, \country{Vietnam}}


\abstract{ 
Active domain adaptation researches focus on alleviating the annotation cost, which comprises source data and target data labelling, as well as enhancing the performance of unsupervised domain adaptation.
However, optimizing the adaptation model based on the average accuracy can lead to the drop in performance of some difficult categories. 
To overcome this issue, a point-wise selection strategy built upon the influence function can be used to improve domain adaptation performance.
In our work, we first address the sensitive and redundant aspects of point-wise selection, which can lead to overfitting of the acquisition function.
We thus introduce a region-based approach that ensures the improvement of the adaptation model by maximizing a positive manifold coverage.
An uncertainty score is further introduced to the selection mechanism to learn a better decision boundary for the target domain.
In comprehensive experiments, our proposed method surpasses the first category-aware active domain adaptation baseline as well as existing active domain adaptation methods across multiple domain adaptation benchmarks. Our implemenation is available at: \url{https://github.com/nhtm2702/WHO_IS_NEXT}.
} 

\keywords{Active Domain Adaptation, Category-Aware, Uncertainty-Aware, Region-based}



\maketitle

\section{Introduction}\label{sec1}

Domain Adaptation (DA) aims to leverage knowledge learned in a source domain to deploy in a related target domain \cite{ganin2016domain}.
In this setting, the oracle is required to annotate the whole large amount of data, which comprises a source dataset and a target dataset \cite{fang2020opensetdomainadaptation}.
To make DA works more practically, the line of active learning is combined with DA to study the cost-effectiveness in transfer between related domains \cite{Prabhu_2021_ICCV, xie2023dirichletbaseduncertaintycalibrationactive}.
Within the frame of DA, active learning-based methods require labels for the whole source dataset and only a subset of the target dataset, which satisfies a particular design strategy to improve the adaptation model performance.
Most existing active DA works, such as CLUE \cite{Prabhu_2021_ICCV}, DUC \cite{xie2023dirichletbaseduncertaintycalibrationactive} assess the effectiveness of the adaptation model through the average performance across the entire target category, which can lead to a bias phenomenon.
Therefore, the learner tends to query the easier categories, while the most challenging categories seldom receive the advancements or even suffer the performance decreases after active learning rounds.

To alleviate this issue, CAADA \cite{pmlr-v235-xiao24b} introduces a new active domain adaptation setting that trains a model for a specific category to ensure the improvement of individual categories.  
This work adapts the influence function \cite{Cook1980CharacterizationsOA} to the selection strategy to select the samples that are considered potential to improve each category model's performance.
However, in the early rounds when the model performance is still unreliable, directly using the parameter of the logistic regression model in the point-wise selection strategy may necessitate careful design.
Moreover, when using point-based selection, high influence samples may concentrate in the same local region of the feature space.
This leads to redundant selections, where queried samples are mutually close and provide overlapping information, while the learned model is expected to approximate the underlying data distribution.
Thus, this approach does not guarantee that it will enhance model generalization, and it may lead to overfitting of the acquisition function to the empirical training objective \cite{farquhar2021statisticalbiasactivelearning, Zhan2022AsymptoticOF}. 

In our work, we first discuss the redundant and parameter-dependent issues of point-based scores in the category-aware domain adaptation setting.
Based on the analysis, we leverage the point-based selection to a region-based selection with a positive manifold score.
The positive manifold score is designed to measure the coverability of candidate samples to the underlying target distribution through Monte-Carlo approximation.
Moreover, to learn a better decision boundary, we employ the uncertainty guidance, which encourages the selection strategy to query the sample to discover more about the decision boundary regions while reducing focus on regions that are quite certain.
We evaluate our proposed method with the active domain adaptation baselines across three benchmarks.
The results highlight that our method enhances the average accuracy significantly with better underlying data coverages.

\section{Background and Preliminaries}\label{sec2}
\subsection{Related Works}

Domain Adaptation methods are designed to within a transfer learning setting, that operates knowledge generalization on a source domain and then transfers this to a related target domain. 
According to the assumption of target data accessibility, DA settings fall into unsupervised \cite{JMLR:v17:15-239, 9535218}, semi-supervised \cite{Saito_2019_ICCV, basak2025semidavilsemisuperviseddomainadaptation}, weakly-supervised \cite{okuo2025weaklysuperviseddomainadaptationproportionconstrained, doan2024weaklysupervisedtesttimedomain}, and active \cite{Prabhu_2021_ICCV, gavves2015activetransferlearningzeroshot, pmlr-v235-xiao24b}.
In this work, we focus on the active branch that leverages the active learning strategy to reduce annotation cost when adapting knowledge to the target domain.

Active Learning is a machine learning branch that aims to reduce the annotation cost through constructing a selection strategy to choose a subset of unlabeled data to annotate.
ISAL \cite{liu2021influence} selects the most positive influence samples on model performance to annotate. 
\cite{yehuda2022activelearningcoveringlens} defines a cover region to select samples in this and assumes this region can maximize the coverage of data distribution.
In real-world applications, AL can be combined with DA to reduce annotation costs while enhancing model performance on the target domain \cite{Prabhu_2021_ICCV, xie2023dirichletbaseduncertaintycalibrationactive, gavves2015activetransferlearningzeroshot, pmlr-v235-xiao24b}.
CLUE \cite{Prabhu_2021_ICCV} integrates both diversity and uncertainty into an uncertainty-weighted clustering framework. 
DUC \cite{xie2023dirichletbaseduncertaintycalibrationactive} further explores the uncertainty miscalibration in DA problems, and employs a Dirichlet-based evidential model to select uncertain and informative samples.

\subsection{Background}

\noindent\textbf{Active Domain Adaptation (ADA).} 
We denote a domain adaptation model $\mathcal{G}$ trained on a labeled source domain $\mathcal{S}$ and need to be adapted to perform well on a related unlabled target domain $\mathcal{T}$. 
All of the samples from the source dataset are annotated while the target dataset is divided into two parts, $\mathcal{T}_L$ is the labeled target pool and $\mathcal{T}_U$ is the unlabeled target pool, where $\mathcal{T}_L \ll \mathcal{T}_U$.
Both domains have the same set of categories $C$. 
We follow the pool-based setting above to define this problem.
In the initial iteration, we randomly select $\mathcal{B}$ samples from $\mathcal{T}$, then set $\mathcal{T}_L^1 = \mathcal{S} \cup \mathcal{B}$ and $\mathcal{T}_U^1 = \mathcal{T} \setminus \mathcal{B}$.  
Next, at each iteration $k$ with $k \in [2,.., K]$, the DA model $\mathcal{G}$ is retrained on a labeled dataset $\mathcal{T}_L^{k-1}$.
Then, a design selection strategy is used to assess the unlabeled target domain $\mathcal{T}_U^{k-1}$ to actively query and select a set of informative samples $\mathcal{B}$ to annotate.
The labeled target pool is updated as $\mathcal{T}_L^{k}=\mathcal{T}_L^{k-1} \cup \mathcal{B}$ and the unlabeled target pool is $\mathcal{T}_U^{k}=\mathcal{T}_U^{k-1} \setminus \mathcal{B}$.

\noindent\textbf{Category-aware Active Domain Adaptation.} 
Let $\mathcal{G}$ be a pretrained domain adaptation feature extractor, which is used to extract embeddings of source and target datasets.
With a specific category $c$, we binarized the training pool so that samples belonging to class $c$ will be aligned to positive and the rest will be negative.
Let $f_c$, which is parameterized by $\theta$, be a binary logistic regression model trained on the domain-invariant representations extracted by $\mathcal{G}$.
At each iteration $k$, an influence score predictor $f_\mathcal{I}$ is trained to follow the Equation:
\begin{equation}
    \mathcal{L}_{f_\mathcal{I}} = \frac{1}{N_{\mathcal{T}_L}} \sum_{x' \in \mathcal{T}_L} \ell_{mse}(f_\mathcal{I}(x'), \mathcal{I}(x'))
\end{equation}
where $\mathcal{I}(x')$ is the influence estimate of labeled data $x'$. This influence score is calculated based on the Hessian matrix for optimal parameter $\theta$:
\begin{equation}
    \mathcal{I}(x')=-\nabla_{\theta}\mathcal{L}_v\mathbf{H}_{\theta}^{-1} \nabla_{\theta} \ell\!\left(f_c(x'), y\right),
\end{equation}

Then, for each unlabeled target sample $x \in \mathcal{T}_U$, the influence score is estimated as $\mathcal{I}(x) = f_\mathcal{I}(x)$. 
These influence scores are ranked in descending order and $\mathcal{B}$ highest samples are selected.
These $\mathcal{B}$ selected samples are considered most beneficial for recognizing the category $c$.
After $K$ active rounds, the accuracy of classifier $f_c$ will be reported as a performance of the domain adaptation task of category $c$.

\noindent\textbf{Kernel Herding.} 
To model the data coverage of the target distribution, we rely on the Kernel Herding formulation \cite{bae2024generalizedcoveragerobustlowbudget}. 
Let $h_c$ be a non-negative function, which is defined as follows: for any $x$, if $||x_1 - x|| \leq ||x_2 - x||$ then $h_c(x, x_1) \geq h_c(x, x_2)$.
In practice, the measurement $h_c$ implicitly relies on the existence of a good embedding space to compute distance, which is correlated with semantic similarity, and where similar points are likely to cluster together in high density regions.

\section{Method}\label{sec3}
\subsection{Motivation}
Category-aware active domain adaptation is a setting focused on improving the performance for the individual category without adversely affecting others.
The existing method \cite{pmlr-v235-xiao24b}, which designs a point-based selection strategy, requires the Hessian value of the logistic regression $f_c$ to compute the influence score for training the influence predictor $f_\mathcal{I}$. 
Thus, the influence-based acquisition relies on the fitted model $f_c$, typically through first-order gradients and inverse-Hessian terms. 
The resulting scores reflect the influence of a sample under the current optimal parameter $\theta$, rather than an intrinsic data importance. 
Consequently, influence scores may become sensitive to minor changes in the model parameters.

\begin{figure}[!htb]
                \centering
                \includegraphics[width=\textwidth]{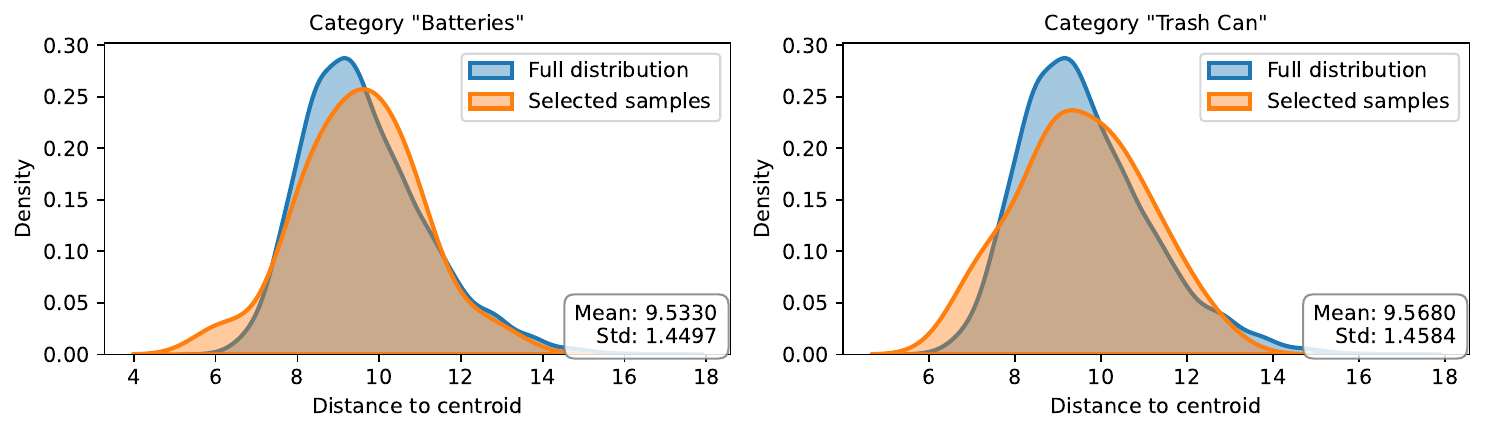}
                \caption{KDE plot for behavior of selection strategy of CAADA method in the $2nd$ active round with two categories, "Batteries" and "Trash Can", in the Ar$\rightarrow$Cl task of the OfficeHome dataset.}
                \label{fig:kde}
            \end{figure} 

On the other hand, when using point-based selection without an explicit diversity or coverage constraint, high influence samples may concentrate in the same local region of the feature space. 
As a result, estimating the model generalization is negatively impacted.
This phenomenon can be seen in the selection strategy of the CAADA method \cite{pmlr-v235-xiao24b}.
We plot the concentration of the selected samples of CAADA in the domain adaptation active learning setting in Figure \ref{fig:kde}.
The samples selected by the point-wise approach of CAADA can be significantly more compact around the category centroid compared to the full distribution.
In both selected categories, the selected samples exhibit higher density peaks and narrower distributions, suggesting that in some cases, this approach mainly focuses on high density regions. 
This effect is particularly clear in the “Trash Can” category, where the selected distribution has a much sharper peak and smaller standard deviation, indicating stronger concentration near the category centroid.

Motivated by these issues, we aim to construct an active strategy, which rarely relies on sensitive parameters of the regression model in the query phase.
Furthermore, our objective is to design a region-based selection strategy, which can eliminate the redundancy of point-based selection as well as boost the adaptation performance for each category.

\begin{figure}[!htb]
                \centering
                \includegraphics[width=\textwidth]{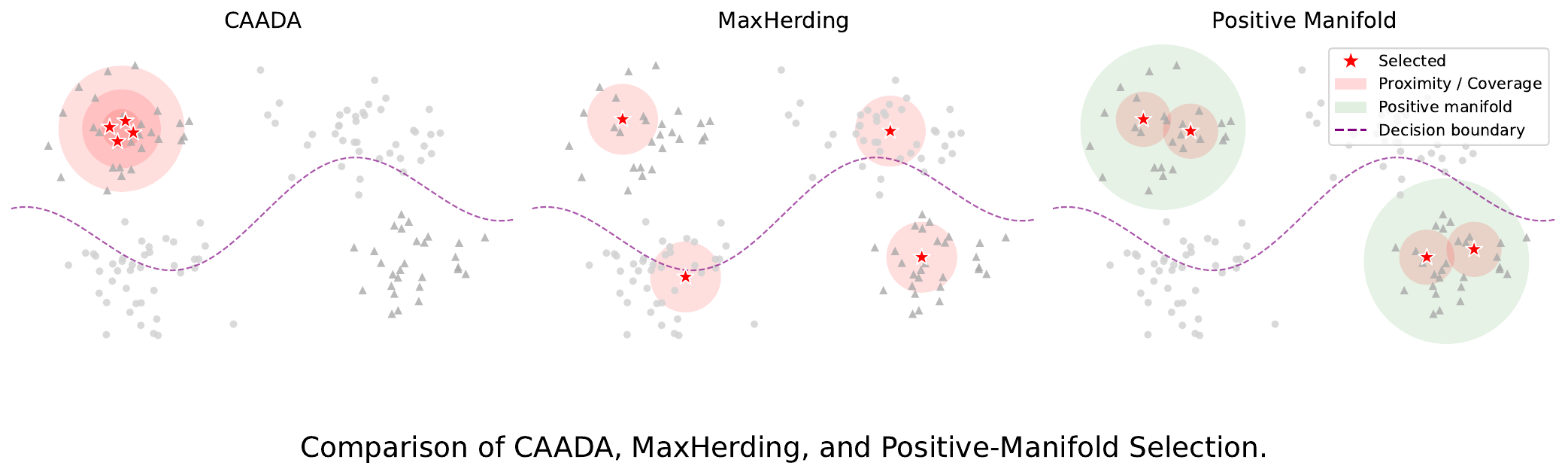}
                \caption{Illustration of different coverage types between CAADA, MaxHerding and our Positive Manifold method.}
                \label{fig:comparison}
            \end{figure} 

In Fig. \ref{fig:comparison}, we visualize differences in assessing coverage on an illustration dataset to demonstrate the behaviour of the selection strategy. 
CAADA (left) focuses on finding high influence samples for a specific category, which can lead to a local concentration.
MaxHerding (middle) introduces a proximity measure for the whole data distribution while ignoring the category-based setting, thus, it may still select samples from negative or task-irrelevant regions.
In contrast, our positive manifold selection (right) performs coverage only over the positive constraint, thus reducing redundant selections and avoiding selection in irrelevant regions.

\subsection{Uncertainty-Guided Selection with Maximizing Region Coverage}
\label{sec4}

To avoid the limitations of the point-wise selection approach, we propose region-based active learning into a domain adaptation framework.
We rely on the Kernel Herding \cite{bae2024generalizedcoveragerobustlowbudget, bae2025uncertaintyherdingactivelearning} idea to introduce the region-based selection strategy.
However, instead of only assessing the coverage of the whole data distribution, our proposed method is extended along two aspects.
First, we estimate category-relevant coverage regions, which correspond to the positive manifold of each category, allowing the selection strategy to focus on category-specific target regions. Second, to ensure that the covered regions are informative for adaptation, we introduce an informative positive manifold coverage, which prioritizes regions where the model lacks confidence.

Our proposed framework is illustrated in Figure \ref{fig:pipeline}.
We use the feature representations of both the source and target domains for training the objective learner $f_c$ as well as match the target distribution to the source distribution with respect to category $c$.
The latter objective can be achieved by optimizing the coverage gain, which is detailed by the kernel-based score $h_c$.
Specifically, we compute a positive manifold coverage worth respect to the category $c$, by relying on $h_c$ and the labeled pool $\mathcal{T}_L$.
Following the region-based approaches in active learning \citep{bae2024generalizedcoveragerobustlowbudget, bae2025uncertaintyherdingactivelearning}, our goal is to approximate the proposed coverage objective using Monte-Carlo estimation over the unlabeled target pool $\mathcal{T}_U$.
However, the data coverage assumption does not directly transfer to a better decision boundary.
Therefore, we further introduce an uncertainty guide process to estimate how positive manifold coverage is accumulated over the unlabeled target samples. 
This design enables the positive manifold coverage strategy to prioritize informative regions while still preserving the category-aware positive manifold objective.

\begin{figure}[!htb]
                \centering
                \includegraphics[width=\textwidth]{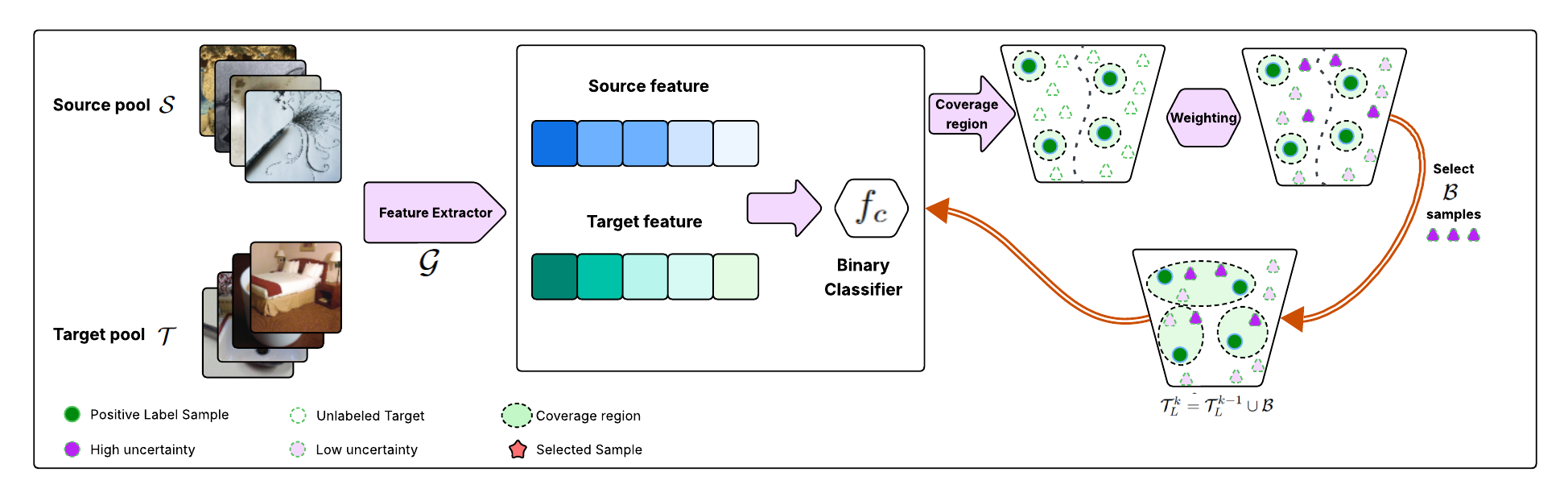}
                \caption{Our proposed category-aware active domain adaptation method focuses on individual categories via uncertainty-Guided Region Coverage.}
                \label{fig:pipeline}
            \end{figure}

With the labeled pool $\mathcal{T}_L$, the data coverage in a category-based setting can be defined on the input feature representation and the kernel function $h_c$.
We denoted $P(\mathcal{T}_L; h_c)$ as the coverage of the labeled pool $\mathcal{T}_L$ compared with the whole data distribution, in the ideal scenario, our $P(\mathcal{T}_L; h_c) \approx 1$.
With the active domain adaptation setting, where the annotation cost is expensive, we want to select a subset of samples from the unlabeled pool $\mathcal{T}_U$ that maximize the $P(\mathcal{T}_L; h_c)$ with a specific budget $\mathcal{B}$.

Consider a classifier model $f_c$ for category $c$, which is trained on a \textit{c-category} labeled dataset. 
The adapted general training loss compared to the true labeling function $\hat f_c$ is defined as: 
 \begin{equation}
     \Pr_{x} \big(\hat f_c(x) \neq f_c(x)\big) \leq  \left(1 - \mathbb{E}_x\left[\max_{x' \in \mathcal{T}_L} h_c(x, x')\right]\right) + \mathbb{E}_x\left[\max_{x': f_c(x) \neq f_c(x')} h_c(x, x')\right]
\label{eq:loss}
\end{equation}

This training error decreases when there exists a candidate sample $x$ which is highly similar to $x'$ and has the same category in the feature space, and less overlap between different classes of $x'$ and $x$. 
To optimize the first expectation on the right side of training error, we define a concept as follows: 
\begin{definition}[\textit{$P-$Coverage function}]
    \label{def:1}
    With $h_c$ is a nonnegative function as denoted above, we define a \textit{coverage gain} term as follows:
    \begin{equation} \label{eq:4}
        \Delta h_c(x; \mathcal{T}_L) = P(\mathcal{T}_L \cup \{x\};h_c) - P(\mathcal{T}_L;h_c),
    \end{equation}
    where $P(\mathcal{T}_L;h_c)=  \mathbb{E}_{x \sim p(x| y=c) }\left[\max_{x' \in \mathcal{T}_L} h_c(x, x')\right]$.
\end{definition}
The above definition is a straightforward extension from region-based selection of \cite{bae2024generalizedcoveragerobustlowbudget, bae2025uncertaintyherdingactivelearning} to the coverage of a specific category $c$. The following lemma states the monotonicity of the coverage function.

\begin{lemma}[Monotonicity]
Let $A \subseteq B$. The coverage function $P(\mathcal{T}_L;h_c)$ is monotone non-decreasing, 
\begin{equation}
P(A;h_c) \le P(B;h_c).
\end{equation}
\label{lem:1}
\end{lemma}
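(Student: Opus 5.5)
The argument is pointwise: show that the integrand defining $P(\cdot;h_c)$ in Definition~\ref{def:1} is monotone in the set argument, then integrate against $p(x\mid y=c)$. No property of $h_c$ beyond non-negativity and measurability is needed; the distance-monotonicity condition from the Kernel Herding paragraph plays no role.

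\textbf{Step 1 (pointwise inequality).} Fix any $x$ in the support of $p(x\mid y=c)$ and let $A \subseteq B$ be finite subsets of the feature space, as labeled pools are.
\begin{itemize}
\item If $A \neq \emptyset$, the maximum $\max_{x'\in A} h_c(x,x')$ is attained at some $a^\star \in A$.
\item Since $a^\star \in B$, we get
\begin{equation*}
\max_{x'\in A} h_c(x,x') = h_c(x,a^\star) \le \max_{x'\in B} h_c(x,x').
\end{equation*}
\item If $A=\emptyset$, adopt the convention $\max_{x'\in\emptyset} h_c(x,x') = 0$, which matches $P(\emptyset;h_c)$ meaning ``no coverage.'' Non-negativity of $h_c$ then gives $0 \le \max_{x'\in B} h_c(x,x')$ immediately.
\end{itemize}

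\textbf{Step 2 (integrate).} Take expectations with respect to $x \sim p(x\mid y=c)$ on both sides of the pointwise inequality. Monotonicity of the integral then gives $P(A;h_c) \le P(B;h_c)$.

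\textbf{Step 3 (well-definedness).} The only real care point is that both expectations exist. The integrands are non-negative, and a maximum of finitely many measurable functions of $x$ is measurable. So each expectation is well defined in $[0,\infty]$, and monotonicity of the Lebesgue integral for non-negative functions applies even if a value is infinite. If $h_c$ is bounded by $1$, as the requirement $P \approx 1$ suggests, the expectations are finite.

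\textbf{Corollary.} Setting $A=\mathcal{T}_L$ and $B=\mathcal{T}_L\cup\{x\}$ yields $\Delta h_c(x;\mathcal{T}_L)\ge 0$ for every candidate $x$, i.e. adding a sample never decreases coverage.

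I expect no genuine obstacle. The step most needing an explicit convention is the empty-set case in Step 1.
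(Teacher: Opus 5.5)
Your proof is correct and follows the paper's own argument: the pointwise inequality $\max_{x'\in A} h_c(x,x') \le \max_{x'\in B} h_c(x,x')$, then taking expectations under $p(x\mid y=c)$. Your treatment of the empty-set convention and of measurability and finiteness of the expectations fills in details the paper leaves implicit, but the route is the same.
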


\begin{proof}
For any sample $x$, since $A \subseteq B$, we have
\begin{equation*}
\max_{x' \in A} h_c(x,x')
\le
\max_{x' \in B} h_c(x,x').
\end{equation*}
Taking expectation with respect to $p(x|y=c)$ on both sides,
\begin{equation*}
P(A;h_c)
=
\mathbb{E}_{x \sim p(x|y=c)}
\left[
\max_{x' \in A} h_c(x,x')
\right]
\le
\mathbb{E}_{x \sim p(x|y=c)}
\left[
\max_{x' \in B} h_c(x,x')
\right]
=
P(B;h_c).
\end{equation*}
Therefore, $P(\mathcal{T}_L;h_c)$ is monotone non-decreasing.
\end{proof}

Built upon the coverage gain definition, our objective is to quantify the utility of a queried sample by the increase in category coverage gain.
To compute the coverage $P(\mathcal{T}_L;h_c)$ term, we use Monte-Carlo approximation: 
\begin{equation}
\begin{aligned}
    P(\mathcal{T}_L;h_c)=  \mathbb{E}_{x \sim p(x| y=c) }\left[\max_{x' \in \mathcal{T}_L} h_c(x, x')\right] \\ 
    \approx \frac{1}{N_{\mathcal{T}_U}} \sum_{i=1}^{N_{\mathcal{T}_U}} \max_{\mathbf{x}'\in \mathcal{T}_L} h_c(x_i,x')
\end{aligned}
\label{eq:coverage}
\end{equation}

Then, we can rewrite the coverage gain in Eq. \ref{eq:4} in detail as:
\begin{equation}
\begin{aligned}
    \Delta h_c(x; \mathcal{T}_L) 
    \approx \frac{1}{N_{\mathcal{T}_U}} \sum_{i=1}^{N_{\mathcal{T}_U}} \max_{\mathbf{x}'\in \mathcal{T}_L \cup \{x\}} h_c(x_i,x') - \frac{1}{N_{\mathcal{T}_U}} \sum_{i=1}^{N_{\mathcal{T}_U}} \max_{\mathbf{x}'\in \mathcal{T}_L} h_c(x_i,x')
\end{aligned}
\label{eq:coverage_gain}
\end{equation}

Our approach considers each sample $x$ of unlabeled target pool $\mathcal{T}_U$, adds it to $\mathcal{T}_L$ then computes the coverage gain.
After that, we choose the sample that has the maximal $\Delta h_c(x; \mathcal{T}_L)$.
This approach relies on the characteristics of the extracted features rather than on the regression parameters, which are quite sensitive during the training process.

In the category-based setting, the category $c$ can be viewed as a class-conditional location to select the next batches.
This information is integrated into our selection mechanism through the positive manifold score.
The positive manifold score approximates the effect of selecting a specific sample on the coverage of a given class $c$.
We first give a definition of the positive manifold region.
\begin{definition}[Positive Coverage Gain]
\label{def:positiveM}
Let $c$ denote the target category and let
\[
y_c=
\begin{cases}
1, & y=c,\\
0, & y\neq c,
\end{cases}
\]
be the corresponding binarized label. The \emph{positive manifold} associated with category $c$ is defined as the support of the conditional distribution
\[
p(x\mid y_c=1),
\]
which consists of all labeled samples belonging to category $c$. We denote this manifold by
\[
\mathcal M_c^{+} =
\{x \in \mathcal{T}_L : y_c=1\}.
\]

The coverage gain of a sample $x$ is defined as:
\begin{equation}
\begin{aligned}
    \Delta h_c(x; \mathcal M_c^{+}) 
    \approx \frac{1}{N_{\mathcal{T}_U}} \sum_{i=1}^{N_{\mathcal{T}_U}} \max_{\mathbf{x}'\in \mathcal M_c^{+} \cup \{x\}} h_c(x_i,x') - \frac{1}{N_{\mathcal{T}_U}} \sum_{i=1}^{N_{\mathcal{T}_U}} \max_{\mathbf{x}'\in \mathcal M_c^{+}} h_c(x_i,x')
\end{aligned}
\label{eq:positive_gain}
\end{equation}

\end{definition}

This equation ensures the positive region coverage naturally discourages selecting redundant samples that are already well represented by the current labeled set.

To construct this positive manifold region, we compute the kernel between all label samples that have a positive category and the whole data pool.
However, in the first iteration, we do not have accessibility to the label of the target pool.
Therefore, we employ a warm-up phase by randomly selecting samples to add in the first labeled pool.
An alternative choice is to utilize only positive labeled samples from the source domain.
However, the resulting positive manifold estimation may result in bias to the source domain, which diverges from the target data distribution if the target and the source domains are distant from each other.


Although positive manifold coverage encourages the expansion of data distribution to reduce redundancy, it implicitly assumes that the informativeness of all regions of the positive manifold is equal. 
In particular, samples for which the current category classifier is uncertain may provide more useful supervision than samples that are already classified with high confidence.
Therefore, we use the predictive uncertainty of the category classifier as an informativeness guide so that regions near the current category decision boundary receive larger emphasis while the overall coverage structure is still retained.
The uncertainty measure is defined through the entropy of a predicted probability as:
\begin{equation}
    \mathcal{U}(x;f_c) = -p \log_2p - (1-p)\log_2(1-p)
\label{eq:entropy}
\end{equation}
with $p = P(y=1|x)$ is the predicted probability of sample $x$.
The positive manifold score in Eq. \ref{eq:positive_gain} is used to combine with the uncertainty measure
\begin{equation}
     s(x, \mathcal M_c^{+}; f_c) =  \frac{1}{N_{\mathcal{T}_U}} \sum_{i=1}^{N_{\mathcal{T}_U}} \mathcal{U}(x_i;f_c) \left(\max_{\mathbf{x}'\in \mathcal M_c^{+} \cup \{x\}} h_c(x_i,x') - \max_{\mathbf{x}'\in \mathcal M_c^{+}} h_c(x_i,x')\right)
\label{eq:score}
\end{equation}
The resulting score $s(x, \mathcal M_c^{+}; f_c)$ combines the uncertainty of both the point under consideration and the region it covers, rather than just the point itself.
Note that, although the uncertainty signal $\mathcal{U}(x;f_c)$ is a sample-based estimation, it is not used to select a new sample directly.
Instead, it serves as a guidance signal for estimating category-aware coverage gain on the positive manifold.

Our framework is summarized in Algorithm \ref{alg:hgt}.
The overall setup follows the general setting of category-aware active domain adaptation \cite{pmlr-v235-xiao24b}.
After the warming up phase, for each iteration, our positive manifold coverage method starts by constructing the kernel function $h_c$.
Then, the coverage gain and uncertainty score of each sample are estimated by employing Eq. \ref{eq:coverage} and Eq. \ref{eq:entropy}.
The new sample $x_i$ is then selected to maximize the score 
$s(x, \mathcal M_c^{+}; f_c)$ in Eq. \ref{eq:score}.


For the complexity estimation, the time complexity of initial $h_c$ is $O(n_Ln_{\mathcal{T}_U})$, where $n_L$ is the number of labeled samples and $n_{\mathcal{T}_U}$ is the number of unlabeled samples.
The uncertainty scores for all unlabeled samples can be computed in $O(n_{\mathcal{T}_U})$, excluding the model inference cost.
Computing the $s(x, \mathcal M_c^{+}; f_c)$ score will take $O(n_{\mathcal{T}_U}^2)$.
The $\arg \max$ operation takes $O(n_{\mathcal{T}_U})$.
This selection strategy loops for $\mathcal{B}$ rounds, so the overall time complexity is $O(\mathcal{B}n_{\mathcal{T}_U}^2)$.
Consequently, the overall complexity of our proposed method is $O(n_Ln_{\mathcal{T}_U} + \mathcal{B}n_{\mathcal{T}_U}^2)$.

\begin{algorithm}
\caption{Uncertainty-Aware Region-based Algorithm}\label{alg:hgt}
\begin{algorithmic}[1]
\State \textbf{Input:} Labeled source data pool $\mathcal{S}$, unlabeled target data pool $\mathcal{T}$, budget $\mathcal{B}$ per round and total rounds $K$ and a base DA model $\mathcal{G}$.
\State \textbf{Output:} Classification model $f_c$ for category $c$.

\State \textbf{for} $c = 1$ to $C$ do

\State \hspace{1em} Binarized the labeled data representations.
\State \hspace{1em} Randomly select $\mathcal{B}$ samples, as demonstrated in Section \ref{sec3}.
\State \hspace{1em} Initial the labeled pool is $\mathcal{T}_L^1 = \mathcal{S} \cup \mathcal{B}$ and the unlabeled pool is $\mathcal{T}_U^1 = \mathcal{T} \setminus \mathcal{B}$.
\State \hspace{1em} Train an initial classification model $f'$ with the $\mathcal{T}_L^1$ pool.

\State \hspace{1em} \textbf{for} $k = 2$ to $K$ do
\State \hspace{2em} $\mathcal{S}_B = \emptyset$ is a temporary set of $\mathcal{B}$ selected samples in each active round.
\State \hspace{2em} Compute $h_c \in \mathbb{R}^{|\mathcal{T}_U^{k-1}|}$, where $h_c^i = \max_{x' \in \mathcal{M}_c^{+}} h_c(x_i, x')$, and $x_i \in \mathcal{T}_U^{k-1}$.
\State \hspace{2em} \textbf{for} $b = 1$ to $\mathcal{B}$ do
\State \hspace{3em} Calculate coverage gain for each candidate sample $\Delta h_c(x_i; \mathcal M_c^{+})$ with Eq. \ref{eq:coverage}
\State \hspace{3em} Calculate uncertainty score for each sample in $\mathcal{T}_U^{k-1}$ with Eq. \ref{eq:entropy}.
\State \hspace{3em} Select $x^* = \arg\max_{x\in \mathcal{T}_U^{k-1}} s(x, \mathcal M_c^{+}; f_c) $ with $s(x, \mathcal M_c^{+}; f_c)$ is computed following the Eq. \ref{eq:score}.
\State \hspace{3em} Update  $\mathcal{S}_B = \mathcal{S}_B \cup \{x^*\}$
\State \hspace{3em} Update $h_c^i \leftarrow \max (h_c(x_i,x^*),h_c^i)$.
\State \hspace{2em} \textbf{end for}
\State \hspace{2em} Update $\mathcal{T}_U^{k} = \mathcal{T}_U^{k-1} \setminus \mathcal{S}_B$
\State \hspace{2em} Update  $\mathcal{T}_L^{k} = \mathcal{T}_L^{k-1} \cup \mathcal{S}_B$
\State \hspace{1em} \textbf{end for}
\State \hspace{1em} Retrain classification model $f'$ with $\mathcal{T}_L^{K}$.
\State \hspace{1em} Set classification model $f_c = f'$.
\State \textbf{end for}
\end{algorithmic}
\end{algorithm}
\subsection{Theory Analysis}
In this section, we analyze the point-based behavior in a category-aware setting and compare it with our proposed region-based selection mechanism.
The analysis provides a guarantee on reducing the redundant selection with our proposed method.
We first assume a Lipschitz Continuity on the estimation of Category-based Influence Score:

\begin{assumption}[Lipschitz Continuity of Category-based Influence]\label{ass:1}
For a target category $c$, the category-wise influence score $I_c(x)$ is $L_I$-Lipschitz continuous in the feature space. That is, for any two target samples $x_i, x_j \in \mathcal{T}_U$,
\[
|I_c(x_i) - I_c(x_j)|
\le
L_I \, d(g_i, g_j),
\]
where $\mathcal{T}_U$ is unlabeled target pool, 
$g_i=\mathcal{G}(x_i)$ and 
$g_j=\mathcal{G}(x_j)$ are features extracted from base DA model $\mathcal{G}$ and $d(\cdot,\cdot)$ denotes a distance metric in the feature space.
\end{assumption}

From this assumption, we present a proposition about the local concentration induced by Lipschitz in case of influence-based selection:

\begin{proposition}[Lipschitz-induced Local Influence Concentration]\label{prop:2}
Let \(x^\star \in \mathcal{T}_U\) be a high influence sample with \(I_c(x^\star)=M\). For any radius \(\lambda>0\), define the local neighborhood $r_\lambda(x^\star)=\{x \in \mathcal{T}_U: d(g, g^\star) \le \lambda\}$.
If $r_\lambda(x^\star)$ contains at least $B$ samples and 
 $M - L_I\lambda  > \max_{z \in \mathcal{T}_U\setminus r_\lambda(x^\star)} I_c(z)$,
then the influence top-$B$ selection satisfies
\[
S_B \subseteq r_\lambda(x^\star).
\]
\end{proposition}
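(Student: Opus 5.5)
The argument is a direct comparison of influence values inside and outside the ball $r_\lambda(x^\star)$, using Assumption~\ref{ass:1} to control the inside and the stated margin condition to control the outside.

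\textbf{Step 1 (lower bound inside the ball).} Take any $x \in r_\lambda(x^\star)$ with feature $g=\mathcal{G}(x)$. By definition of the neighborhood, $d(g,g^\star)\le\lambda$. Assumption~\ref{ass:1} then gives $|I_c(x)-I_c(x^\star)|\le L_I\lambda$. Hence
\[
I_c(x)\ \ge\ M - L_I\lambda \qquad \text{for all } x\in r_\lambda(x^\star).
\]

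\textbf{Step 2 (strict separation).} Set $m:=\max_{z\in\mathcal{T}_U\setminus r_\lambda(x^\star)} I_c(z)$; this is a maximum over a finite pool, with $m=-\infty$ if the complement is empty. The hypothesis $M-L_I\lambda>m$ combined with Step 1 yields
\[
I_c(x) > I_c(z) \qquad \text{for every } x\in r_\lambda(x^\star) \text{ and every } z\notin r_\lambda(x^\star).
\]
So every sample inside the ball strictly outranks every sample outside it.

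\textbf{Step 3 (top-$B$ selection).} Let $S_B$ be the $B$ samples with the largest $I_c$, as in the influence-based procedure ranked in descending order. Suppose, for contradiction, that some $z\in S_B$ lies outside $r_\lambda(x^\star)$. Since $|r_\lambda(x^\star)|\ge B$ and $|S_B|=B$, some $x\in r_\lambda(x^\star)$ is not in $S_B$. By Step 2, $I_c(x)>I_c(z)$. This contradicts the defining property of top-$B$ selection, namely that every selected score is at least every unselected score. Therefore $S_B\subseteq r_\lambda(x^\star)$.

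The main obstacle is Step 3, specifically ties. The separation in Step 2 is strict, so no tie-breaking ambiguity can place an outside element in $S_B$, and the argument works for any top-$B$ rule. The only other point to make explicit is that $\mathcal{T}_U$ is finite, so the maximum $m$ is attained.
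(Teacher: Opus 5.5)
Your proof is correct and matches the paper's argument: the Lipschitz bound gives $I_c(x)\ge M-L_I\lambda$ on $r_\lambda(x^\star)$, and the margin condition together with $|r_\lambda(x^\star)|\ge B$ forces the top-$B$ set into the ball. Your Step 3 contradiction argument, including the remark that strict separation rules out tie-breaking issues, spells out the step the paper states in a single sentence.
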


\begin{corollary}[Maximum distance of top-B selected samples]\label{coro:2}
For any pair of $(x_i, x_j)$ in the top-B selection that is located in any radius $\lambda$,
\[
\max_{x_i,x_j \in S_B} d(g_i,g_j) \le 2\lambda.
\]
\end{corollary}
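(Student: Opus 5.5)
The corollary should follow from Proposition~\ref{prop:2} together with the triangle inequality for the metric $d$. I read the phrase ``located in any radius $\lambda$'' as meaning that the hypotheses of Proposition~\ref{prop:2} hold for some high-influence anchor $x^\star$ and radius $\lambda$. That is, $r_\lambda(x^\star)$ contains at least $B$ samples and $M - L_I\lambda > \max_{z \notin r_\lambda(x^\star)} I_c(z)$.

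The first step is to invoke Proposition~\ref{prop:2}, which gives the containment $S_B \subseteq r_\lambda(x^\star)$. For every selected $x_i \in S_B$ this yields $d(g_i, g^\star) \le \lambda$, by the definition of the neighborhood $r_\lambda(x^\star)$.

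The second step is to take an arbitrary pair $x_i, x_j \in S_B$ and apply the triangle inequality through the anchor feature $g^\star$. Using the symmetry of $d$, this gives
\[
d(g_i, g_j) \le d(g_i, g^\star) + d(g^\star, g_j) \le \lambda + \lambda = 2\lambda .
\]
Since the pair was arbitrary, taking the maximum over $x_i, x_j \in S_B$ gives the stated bound. The maximum exists because $S_B$ is finite.

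There is no real technical obstacle. The only delicate point is interpretive: the bound needs $d$ to be a genuine metric, so that the triangle inequality and symmetry hold. It also needs the ``radius $\lambda$'' in the corollary to be the same $\lambda$ for which the separation condition of Proposition~\ref{prop:2} is satisfied. I would state both conditions explicitly at the start of the proof.

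I would also note that the resulting diameter bound is exactly what formalizes the redundancy of point-wise influence selection. All $B$ queried samples lie in a ball of diameter $2\lambda$, so they carry overlapping information, which is the contrast with the coverage-gain criterion of Definition~\ref{def:positiveM}.
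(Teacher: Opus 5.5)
Your proposal is correct and follows the paper's argument: the paper likewise gets $S_B \subseteq r_\lambda(x^\star)$ from Proposition~\ref{prop:2} (proved in the same block via the Lipschitz bound $I_c(x)\ge M-L_I\lambda$ on the neighborhood) and then bounds $d(g_i,g_j)\le d(g_i,g^\star)+d(g^\star,g_j)\le 2\lambda$ by the triangle inequality through $g^\star$. Your explicit conditions, that $d$ is a genuine metric and that $\lambda$ is the radius satisfying the separation hypothesis, are reasonable clarifications of what the paper leaves implicit.
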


\begin{proof}
By the Lipschitz continuity of \(I_c\), for any \(x \in r_\lambda(x^\star)\),
\[
|I_c(x)-I_c(x^\star)| \le L_I d(g,g^\star) \le L_I \lambda.
\]
Therefore,
\[
I_c(x) \ge I_c(x^\star)-L_I \lambda = M-L_I \lambda.
\]
If \(M-L_I \lambda\) is larger than the maximum influence score outside \(r_\lambda(x^\star)\), and \(r_\lambda(x^\star)\) contains at least \(B\) samples, then the top-\(B\) samples ranked by \(I_c(x)\) must all lie inside \(r_\lambda(x^\star)\). Hence \(S_B \subseteq r_\lambda(x^\star)\). Since every selected sample lies within radius \(\lambda\) of \(x^\star\), by the triangle inequality, for any \(x_i,x_j \in S_B\),
\[
d(g_i,g_j) \le d(g_i,g^\star)+d(g^\star,g_j) \le 2\lambda.
\]
\end{proof}

Therefore, Lipschitz continuity provides a theoretical explanation for local influence concentration.
If influence score changes smoothly across features, then high influence samples will attract close ones as well. 
If that neighbor is densely populated and has a higher score than the outside, the influence-only top-$\mathcal{B}$ will choose many points close together.
To solve this issue, we propose a region-based approach that encourages the selection strategy to choose a sample which can approximate the underlying data distribution.
We formalize this intuition with the following proposition.

\begin{proposition}[Redundancy reduction]\label{prop:3}
Consider an unlabeled candidate $x$, if $x$ is redundant with respect to the current selected set, 
\[
h_c(x_i,x)\le \max_{x'\in \mathcal M_c^{+}}h_c(x_i,x')
\quad
\text{for most } x_i,
\]
then
\[
\Delta h_c(x; \mathcal M_c^{+}) \approx 0.
\]
\end{proposition}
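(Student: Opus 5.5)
The plan is to work directly from the Monte-Carlo form of the positive coverage gain in Eq.~\ref{eq:positive_gain}. The key observation is pointwise. For each unlabeled sample $x_i$, write $m_i = \max_{x'\in\mathcal M_c^{+}} h_c(x_i,x')$. Then
\[
\max_{x'\in\mathcal M_c^{+}\cup\{x\}} h_c(x_i,x') - m_i = \max\bigl(h_c(x_i,x), m_i\bigr) - m_i = \bigl(h_c(x_i,x)-m_i\bigr)_+ .
\]
So the gain equals $\frac{1}{N_{\mathcal{T}_U}}\sum_i (h_c(x_i,x)-m_i)_+$. Each summand is nonnegative, which is the pointwise content of Lemma~\ref{lem:1} applied with $A=\mathcal M_c^{+}\subseteq B=\mathcal M_c^{+}\cup\{x\}$. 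In particular $\Delta h_c(x;\mathcal M_c^{+})\ge 0$.

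Next I will make "for most $x_i$" precise. Let $R=\{i: h_c(x_i,x)\le m_i\}$ be the redundant indices and $R^c$ the rest, and assume $|R^c|\le \epsilon N_{\mathcal{T}_U}$ for small $\epsilon$. Summands with $i\in R$ vanish exactly. The remaining summands are bounded by $h_c(x_i,x)\le H:=\sup h_c$. In practice $h_c$ is a bounded kernel, for instance a Gaussian kernel with $H=1$, and I would add boundedness as a standing assumption if necessary. This gives
\[
0\le \Delta h_c(x;\mathcal M_c^{+}) \le \frac{|R^c|}{N_{\mathcal{T}_U}}\,H \le \epsilon H ,
\]
which is the quantitative meaning of $\Delta h_c\approx 0$. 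The same bound carries over to the uncertainty-weighted score $s$ in Eq.~\ref{eq:score}, because the binary entropy satisfies $0\le\mathcal U\le 1$. Hence a redundant candidate is never preferred over one that covers a non-negligible uncovered mass.

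The main obstacle is that the statement is informal: "for most $x_i$" and "$\approx$" are not defined, and the conclusion also needs a bound on $h_c$. I would resolve this by stating the $\epsilon$-fraction version above, or alternatively by allowing the violations to be small in size rather than few in number. In that variant, assume $\frac{1}{N}\sum_i (h_c(x_i,x)-m_i)_+\le\delta$, in which case the gain is $\le\delta$ directly. Either variant yields the proposition once the positive-part identity is established.
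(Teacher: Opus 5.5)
Your proof is correct and follows the paper's argument: both rewrite each Monte-Carlo summand as $\bigl(h_c(x_i,x)-\max_{x'\in\mathcal M_c^{+}}h_c(x_i,x')\bigr)_+$ and observe that it vanishes on the redundant indices. Your $\epsilon$-fraction bound $0\le\Delta h_c(x;\mathcal M_c^{+})\le\epsilon H$ makes precise what the paper leaves as ``zero or small.'' The boundedness you flag needs no extra assumption, because the defining monotonicity of $h_c$ gives $h_c(x_i,x)\le h_c(x_i,x_i)$, so $H=\max_i h_c(x_i,x_i)$ works.
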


\begin{proof}
By Eq. \ref{eq:positive_gain}, we have
\[
\Delta h_c(x; \mathcal M_c^{+})
\approx
\frac{1}{N_{\mathcal{T}_U}}\sum_{i=1}^{N_{\mathcal{T}_U}}
\left[
\max_{x'\in \mathcal M_c^{+}\cup\{x\}}h_c(x_i,x')
-
\max_{x'\in \mathcal M_c^{+}}h_c(x_i,x')
\right].
\]
Let $\mathcal{H} (\mathcal M_c^{+}) = \max_{x'\in \mathcal M_c^{+}}h_c(x_i,x')$. Then, $\max_{x'\in  \mathcal M_c^{+} \cup\{x\}}h_c(x_i,x')$ can be expressed as follows:
\[
\max_{x'\in  \mathcal M_c^{+} \cup\{x\}}h_c(x_i,x')
=
\max\left\{ \mathcal{H} (\mathcal M_c^{+}),h_c(x_i,x)\right\},
\]
We have
\[
\Delta h_c(x;\mathcal M_c^{+}) 
\approx
\frac{1}{N_{\mathcal{T}_U}}\sum_{i=1}^{N_{\mathcal{T}_U}}
\left[
\max\left\{\mathcal{H} (\mathcal M_c^{+}),h_c(x_i,x)\right\}-\mathcal{H} (\mathcal M_c^{+})
\right],
\]
This is equivalent to
\[
\Delta h_c(x; \mathcal M_c^{+}) 
\approx
\begin{cases}
        \frac{1}{N_{\mathcal{T}_U}}\sum_{i=1}^{N_{\mathcal{T}_U}}
\left[
h_c(x_i,x)-\mathcal{H}(\mathcal M_c^{+})
\right], & \text{ if } \mathcal{H}(\mathcal M_c^{+}) \leq h_c(x_i,x), \\
    0 & \text{ otherwise}.
\end{cases}
\]
We can rewrite it in the form of
\[
\Delta h_c(x; \mathcal M_c^{+}) 
\approx 
\frac{1}{N_{\mathcal{T}_U}}\sum_{i=1}^{N_{\mathcal{T}_U}}
\left[
h_c(x_i,x)-\mathcal{H} (\mathcal M_c^{+})
\right]^+
\]
with $[\cdot]^+$ denotes the positive-part operator, defined as $[z]^+ = \max(z,0)$.

Thus, if $h_c(x_i,x)\le \max_{x'\in \mathcal M_c^{+}}h_c(x_i,x')$ for most $x_i$, the corresponding terms are zero or small.
\end{proof}

While our region-based approach selects the candidate sample $x$, which has the maximum coverage gain $\Delta h_c(x; \mathcal M_c^{+})$.
This constraint can reduce the redundant sample that is learned by the current model, in other words, the sample that the current model covers. 

\begin{corollary}[Reduction in returns]\label{coro:3}
For any \(A\subseteq B\) and candidate \(x\notin B\),
\[
\Delta h_c(x; A) \ge \Delta h_c(x; B).
\]
\end{corollary}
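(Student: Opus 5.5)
I will prove the diminishing-returns inequality pointwise in $x_i$ and then average. This is the same strategy as Lemma~\ref{lem:1}, where monotonicity of the inner maximum was passed through the expectation. Here I work with the Monte-Carlo form of the gain in Eq.~\ref{eq:positive_gain} (equivalently Eq.~\ref{eq:coverage_gain}), in which $\Delta h_c(x;A)$ is an average over $x_i\in\mathcal{T}_U$. Exactly the same argument applies to the population version in Definition~\ref{def:1}, since expectation with respect to $p(x\mid y=c)$ is also a nonnegative averaging operator.

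\textbf{Step 1: rewrite the gain as a positive part.} For a fixed set $S$, write $\mathcal{H}_i(S)=\max_{x'\in S}h_c(x_i,x')$. Following the computation in the proof of Proposition~\ref{prop:3}, we have $\max_{x'\in S\cup\{x\}}h_c(x_i,x')=\max\{\mathcal{H}_i(S),h_c(x_i,x)\}$. Hence
\[
\Delta h_c(x;S)\approx\frac{1}{N_{\mathcal{T}_U}}\sum_{i=1}^{N_{\mathcal{T}_U}}\bigl[h_c(x_i,x)-\mathcal{H}_i(S)\bigr]^+ .
\]
I will apply this with $S=A$ and with $S=B$, using the same sample points $x_i$ in both.

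\textbf{Step 2: compare pointwise.} Since $A\subseteq B$, the pointwise inequality inside the proof of Lemma~\ref{lem:1} gives $\mathcal{H}_i(A)\le\mathcal{H}_i(B)$ for every $i$. It follows that $h_c(x_i,x)-\mathcal{H}_i(A)\ge h_c(x_i,x)-\mathcal{H}_i(B)$. Because $z\mapsto[z]^+$ is non-decreasing, we get $[h_c(x_i,x)-\mathcal{H}_i(A)]^+\ge[h_c(x_i,x)-\mathcal{H}_i(B)]^+$ for each $i$.

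\textbf{Step 3: average.} Summing over $i$ and dividing by $N_{\mathcal{T}_U}$ preserves the inequality, which yields $\Delta h_c(x;A)\ge\Delta h_c(x;B)$. The hypothesis $x\notin B$ is not needed for the inequality itself. It only guarantees that adding $x$ is a genuine augmentation of $B$, matching the greedy step in Algorithm~\ref{alg:hgt}.

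\textbf{Edge cases.} The main thing to check is the case where $A$ is empty, since the maximum over an empty set must then be given a value. I would adopt the convention $\max_{\emptyset}=0$, which is consistent because $h_c\ge 0$, so that $\mathcal{H}_i(\emptyset)\le\mathcal{H}_i(B)$ still holds. I would also make sure that the same Monte-Carlo sample $\{x_i\}$ is used for both sides. With a common sample, the "$\approx$" becomes an exact identity for the empirical estimator, and the inequality holds exactly rather than only approximately.

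Finally, I would note that the same argument carries over to the uncertainty-weighted score in Eq.~\ref{eq:score}. Each term there is multiplied by the weight $\mathcal{U}(x_i;f_c)\ge 0$, and multiplying the pointwise inequality of Step 2 by a nonnegative weight preserves it.
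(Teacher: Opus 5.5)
Your proof is correct and follows essentially the same route as the paper. Like the paper, you rewrite the gain as an average of positive parts $[h_c(x_i,x)-\mathcal{H}_i(S)]^+$ as in Proposition~\ref{prop:3}, use $\mathcal{H}_i(A)\le\mathcal{H}_i(B)$, apply monotonicity of $[\cdot]^+$, and average; your version is slightly more precise, since you justify $\mathcal{H}_i(A)\le\mathcal{H}_i(B)$ by the pointwise inequality inside the proof of Lemma~\ref{lem:1} (the paper cites Lemma~\ref{lem:1} itself, which only concerns expectations) and you also address the empty-set convention and the fact that $x\notin B$ is not needed.
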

\begin{proof} 
Since the coverage function is monotone non-decsreasing (Lemma \ref{lem:1}), we have
$\mathcal{H}(A) \leq \mathcal{H}(B)$, with $\mathcal{H}(A), \mathcal{H}(B)$ are computed as $\mathcal{H}(\mathcal M_c^{+})$ in the Proposition \ref{prop:3}. Thus, 
\[
h_c(x_i,x)-\mathcal{H}(A)
\ge
h_c(x_i,x)-\mathcal{H}(B).
\]

Since the positive-part operator is monotone, we have
\[
\left[
h_c(x_i,x)-\mathcal{H}(A)
\right]^+
\ge
\left[
h_c(x_i,x)-\mathcal{H}(B)
\right]^+.
\]

Summing over all $x_i$ samples and dividing by $N_{\mathcal{T}_U}$, we obtain
\[
\Delta h_c(x;A)
\ge
\Delta h_c(x;B).
\]
\end{proof}

In our framework, we incorporate the uncertainty estimation of the candidate region as in Eq. \ref{eq:score}. 
Therefore, it is required that the uncertainty weight is able to preserve the positive region coverage.
To prove that, we first assume a uniform distribution on the estimation of uncertainty $\mathcal{U}$:

\begin{assumption}[Calibrated uncertainty signal]\label{ass:3}
Let $\alpha \in (0,1)$, the uncertainty weights are uniformly bounded:
\[
0 < \alpha \le \mathcal{U}(x_i; f_c) \le 1,
\quad
\forall i\in\{1,\ldots,N_{\mathcal{T}_U}\}.
\]
\end{assumption}

\begin{theorem}[Bounded uncertainty preserves coverage gain]\label{theo:5}
For any labeled set $\mathcal M_c^{+}$ and any candidate $x \in \mathcal M_c^{+}$, the uncertainty-weighted coverage gain satisfies
\[
\alpha \Delta h_c(x; \mathcal M_c^{+})
\le
s(x, \mathcal M_c^{+}; f_c)
\le
\Delta h_c(x; \mathcal M_c^{+}).
\]
Therefore, uncertainty weighting preserves the positive region coverage gain up to multiplicative constants.
\end{theorem}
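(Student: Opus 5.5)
The plan is a termwise sandwich. The positive-part representation from the proof of Proposition~\ref{prop:3} gives
\[
\Delta h_c(x;\mathcal M_c^{+}) \approx \frac{1}{N_{\mathcal{T}_U}}\sum_{i=1}^{N_{\mathcal{T}_U}} \delta_i(x),
\qquad
\delta_i(x) := \bigl[h_c(x_i,x)-\mathcal H(\mathcal M_c^{+})\bigr]^+ ,
\]
where $\mathcal H(\mathcal M_c^{+})=\max_{x'\in\mathcal M_c^{+}}h_c(x_i,x')$. By the identity $\max\{\mathcal H,h_c(x_i,x)\}-\mathcal H=[h_c(x_i,x)-\mathcal H]^+$, the score in Eq.~\ref{eq:score} can be written in exactly the same form:
\[
s(x,\mathcal M_c^{+};f_c)=\frac{1}{N_{\mathcal{T}_U}}\sum_{i=1}^{N_{\mathcal{T}_U}} \mathcal U(x_i;f_c)\,\delta_i(x).
\]
I take the Monte-Carlo expression on the right of Eq.~\ref{eq:positive_gain} as the definition of $\Delta h_c$. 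Then both quantities are finite sums over the same index set $i$, and the comparison reduces to a pointwise comparison of the summands.

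\textbf{Key step.} The crucial observation is that each $\delta_i(x)\ge 0$. This holds because it is a positive part; equivalently, it follows from the monotonicity in Lemma~\ref{lem:1}, since adding $x$ cannot decrease the max. Multiplying the bounds of Assumption~\ref{ass:3}, $\alpha\le\mathcal U(x_i;f_c)\le 1$, by the nonnegative number $\delta_i(x)$ gives
\[
\alpha\,\delta_i(x)\le \mathcal U(x_i;f_c)\,\delta_i(x)\le \delta_i(x)\qquad\text{for every } i .
\]
Summing over $i$ and dividing by $N_{\mathcal{T}_U}$ preserves the inequalities, which yields $\alpha\Delta h_c\le s\le \Delta h_c$.

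\textbf{Remaining details.} The argument needs essentially no other input, so the main care is in the bookkeeping, and I see two points to check.
\begin{itemize}
\item \emph{Upper bound on the entropy weight.} The binary entropy with $\log_2$ in Eq.~\ref{eq:entropy} is at most $1$, so the upper bound in Assumption~\ref{ass:3} is automatic. The lower bound $\alpha$ is the genuine assumption.
\item \emph{The case $x\in\mathcal M_c^{+}$.} The statement says $x\in\mathcal M_c^{+}$. In that case $\max_{\mathcal M_c^{+}\cup\{x\}}=\max_{\mathcal M_c^{+}}$, so every $\delta_i=0$ and both sides vanish. The inequality then holds trivially, with equality throughout. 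The argument above covers this case as well as the intended case $x\notin\mathcal M_c^{+}$, so I would remark that the proof does not depend on this membership.
\end{itemize}

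\textbf{Expected obstacle.} The only real subtlety is making sure the "$\approx$" in Eq.~\ref{eq:positive_gain} is read as the empirical quantity. Otherwise one would compare an expectation with a finite sum, and the constant-factor sandwich would not hold exactly. I would therefore state that convention explicitly at the start of the proof.
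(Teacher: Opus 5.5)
Your proof is correct and is essentially the paper's own argument: rewrite $s(x,\mathcal M_c^{+};f_c)$ in the positive-part form from Proposition~\ref{prop:3}, multiply the bounds $\alpha\le\mathcal U(x_i;f_c)\le 1$ by each nonnegative summand, then sum and divide by $N_{\mathcal{T}_U}$. Your extra remarks are accurate clarifications that the paper leaves implicit: the $\log_2$ entropy makes the upper bound automatic, the stated case $x\in\mathcal M_c^{+}$ makes both sides vanish, and the Monte-Carlo expression must be taken as the definition of $\Delta h_c$ for the sandwich to hold exactly.
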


\begin{proof}
We rewrite Eq. \ref{eq:score} as the form of Eq. \ref{eq:positive_gain} in Proposition \ref{prop:3},
\[
s(x, \mathcal M_c^{+}; f_c)
=
\frac{1}{N_{\mathcal{T}_U}}
\sum_{i=1}^{N_{\mathcal{T}_U}}
\mathcal{U}(x_i; f_c)
\left[
h_c(x_i,x)-\mathcal{H} (\mathcal M_c^{+})
\right]^+.
\].

Since $0<\alpha\le \mathcal{U}(x_i; f_c) \le 1$,
we have
\[
\alpha
\left[
h_c(x_i,x)-\mathcal{H}(\mathcal M_c^{+})
\right]^+
\le
\mathcal{U}(x_i; f_c)
\left[
h_c(x_i,x)-\mathcal{H}(\mathcal M_c^{+})
\right]^+
\le
\left[
h_c(x_i,x)-\mathcal{H}(\mathcal M_c^{+})
\right]^+.
\]

Summing over all $x_i$ samples and dividing by $N_{\mathcal{T}_U}$, we obtain
\[
\alpha \Delta h_c(x; \mathcal M_c^{+})
\le
s(x, \mathcal M_c^{+}; f_c)
\le
\Delta h_c(x; \mathcal M_c^{+}).
\]
\end{proof}

In CAADA \cite{pmlr-v235-xiao24b}, the authors provide category awareness through a pointwise influence predictor but do not explicitly model redundancy among selected samples.
MaxHerding \cite{bae2024generalizedcoveragerobustlowbudget} gives a framework which can reduce region-level redundancy through generalized coverage.
MaxHerding's coverage is defined over the entire data distribution, without distinguishing whether a sample contributes to the coverage of a particular category. 
Thus, these selected samples are class-agnotic.
Our method bridges these two views: it inherits the region-aware coverage principle of MaxHerding, while introducing the category-based setting in active domain adaptation.
Our novelty formulation restricts the coverage measure to the positive region of the target class and then guides the selection strategy by the uncertainty signal. 
The uncertainty-weighted selection principle in our approach is common in active learning, and has been shown to be more effective than using only one of the two signals individually in active domain adaptation \cite{Prabhu_2021_ICCV}. 
For candidates with comparable category relevance, the proposed objective prefers the candidate that yields a larger positive region coverage gain.
Hence, it preserves the category-aware motivation while reducing the redundancy caused by independent top-$\mathcal{B}$ pointwise selection.

\section{Experiments}\label{sec5}
\subsection{Settings}

To evaluate the efficiency of our proposed method, we compare it with five active domain adaptation baselines and a category-aware active domain adaptation, which include Random, DANN \cite{ganin2016domain}, CLUE \cite{Prabhu_2021_ICCV}, DUC \cite{xie2023dirichletbaseduncertaintycalibrationactive}, ISAL \cite{liu2021influence}, CAADA \cite{pmlr-v235-xiao24b}.


The experiments are carried out on three popular benchmarks in the domain adaptation field: Office-Home\cite{venkateswara2017deep}, DomainNet-126 \cite{peng2019moment} and VisDa-2017 \cite{peng2017visda}:
\begin{itemize}
\item{Office-Home \cite{venkateswara2017deep}}: 
comprises four domains: Real-World, Product, Clip Art, and Artistic. Each domain includes images of 65 categories.

\item{DomainNet-126 \cite{peng2019moment}}: 
We follow the same protocol as \cite{pmlr-v235-xiao24b} to choose 126 categories and six-directions from four domains: Real, Clipart, Painting, and Sketch.

\item{VisDa-2017 \cite{peng2017visda}}: 
is separated into training, validation, and testing domains and comprises 
12 categories.
\end{itemize}

We use the ResNet-50 \cite{he2015deepresiduallearningimage} is trained from scratch as the backbone feature extractor and a DANN \cite{ganin2016domain} model is trained as the base DA method following CLUE \cite{Prabhu_2021_ICCV}. 
The feature representations for both the labeled source domain and the unlabeled target domain are extracted from a fixed feature extractor pre-trained by the base DA method.
For each category $c$, we binarize the source pool and train the initial model $f'$ with an $L_2$ weight of $1e-3$.
We perform active learning in five iterations. 
With Office-Home, we set $\mathcal{B} = 1\%$ of the total target dataset, and $\mathcal{B}=100$ for DomainNet-126 and VisDa-2017.
For the kernel function, we use the Gaussian kernel $h_c(x,x') = \exp (\frac{1}{2\sigma^2}||x-x'||^2)$, with the batch size is $512$, and the kernel width $\sigma = 0.7$.
We apply the same training and testing protocol in the experiments for baseline methods, and then report the average per-category accuracy.

\subsection{Main results}

We report
the average accuracy per category in Table \ref{tab:office_home} and Table \ref{tab:domainnet_visda}.
As shown in these two tables, our proposed method achieved the highest average accuracy per category in all tasks of three benchmarks, obtaining an average accuracy of $87.73\%$ on Office-Home, $83.93\%$ on DomainNet-126, and $79.37\%$ on VisDA-2017.

In the Office-Home dataset, our approach outperforms the closest setting, CAADA, in all tasks with the average difference is $8.41\%$.
While CAADA performs a competitive improvement with the other baselines, our proposed method shows notable superiority with the average difference is $26.16\%$ in the DomainNet-126 dataset.
With the adaptation from Synthetic-to-Real domain of the VisDA-2017 dataset, our proposed method outperforms the baselines with $12.69\%$ as well.

\begin{table}[t]
    \centering
    \caption{Average per-category predicting accuracy (\%) for our proposed method on \textit{Office-Home}. \textit{Avg.} column represents the average accuracy across all 6 tasks in the same table.}
    \label{tab:office_home}
    \footnotesize
    \setlength{\tabcolsep}{2pt}
    \begin{tabular}{l|ccccccc}
    \toprule
     & Ar$\rightarrow$Cl & Ar$\rightarrow$Pr & Ar$\rightarrow$Rw & Cl$\rightarrow$Ar & Cl$\rightarrow$Pr & Cl$\rightarrow$Rw & Avg. \\
    \midrule
    {\fontsize{6}{7}\selectfont DANN} & 36.99 $\pm$ 0.52 & 51.83 $\pm$ 0.20 & 58.81 $\pm$ 0.22 & 36.54 $\pm$ 0.12 & 48.59 $\pm$ 0.35 & 59.17 $\pm$ 0.43 & 48.65 \\
    {\fontsize{7.5}{7}\selectfont Random} & 47.19 $\pm$ 0.60 & 55.41 $\pm$ 0.55 & 70.14 $\pm$ 0.34 & 50.92 $\pm$ 0.36 & 65.08 $\pm$ 0.36 & 62.77 $\pm$ 0.33 & 58.58 \\
    {\fontsize{6}{7}\selectfont CLUE} & 48.14 $\pm$ 0.76 & 59.34 $\pm$ 0.73 & 74.34 $\pm$ 0.60 & 54.23 $\pm$ 0.52 & 78.74 $\pm$ 0.89 & 66.92 $\pm$ 0.88 & 63.62 \\
    {\fontsize{6}{7}\selectfont DUC} & 51.23 $\pm$ 0.34 & 61.81 $\pm$ 0.12 & 75.31 $\pm$ 0.23 & 56.41 $\pm$ 0.33 & 79.82 $\pm$ 0.24 & 70.02 $\pm$ 0.44 & 65.76 \\
    {\fontsize{6}{7}\selectfont ISAL} & 64.84 $\pm$ 0.22 & 79.23 $\pm$ 0.12 & 78.66 $\pm$ 0.32 & 62.93 $\pm$ 0.54 & 81.24 $\pm$ 0.55 & 75.62 $\pm$ 0.32 & 73.75 \\
    {\fontsize{6}{7}\selectfont CAADA} & 69.17 $\pm$ 0.12 & 88.42 $\pm$ 0.10 & 86.25 $\pm$ 0.22 & 68.89 $\pm$ 0.13 & 84.47 $\pm$ 0.15& 82.74 $\pm$ 0.43 & 81.01 \\
    {\fontsize{7.5}{7}\selectfont Ours} & \textbf{74.23 $\pm$ 0.10}& \textbf{94.08 $\pm$ 0.05 }& \textbf{94.01 $\pm$ 0.06 }& \textbf{80.10 $\pm$ 0.12 }& \textbf{90.77 $\pm$ 0.09 }& \textbf{89.19 $\pm$ 0.19}& \textbf{87.06} \\
    \midrule
    \midrule
     & Pr$\rightarrow$Ar & Pr$\rightarrow$Cl & Pr$\rightarrow$Rw & Rw$\rightarrow$Ar & Rw$\rightarrow$Cl & Rw$\rightarrow$Pr & Avg. \\
    \midrule
    {\fontsize{6}{7}\selectfont DANN} & 32.83 $\pm$ 0.47 & 32.98 $\pm$ 0.32 & 59.26 $\pm$ 0.33 & 52.28 $\pm$ 0.27 & 68.93 $\pm$ 0.29 & 45.36 $\pm$ 0.36 & 48.61 \\
    {\fontsize{7.5}{7}\selectfont Random} & 48.41 $\pm$ 0.22 & 45.77 $\pm$ 0.27 & 72.11 $\pm$ 0.18 & 61.92 $\pm$ 0.26 & 77.61 $\pm$ 0.35 & 55.78 $\pm$ 0.13 & 60.26 \\
    {\fontsize{6}{7}\selectfont CLUE} & 53.17 $\pm$ 0.45 & 59.83 $\pm$ 0.46 & 75.13 $\pm$ 0.39 & 65.17 $\pm$ 0.54 & 80.42 $\pm$ 0.32 & 61.14 $\pm$ 0.47 & 65.81 \\
    {\fontsize{6}{7}\selectfont DUC} & 59.23 $\pm$ 0.13 & 60.73 $\pm$ 0.16 & 77.34 $\pm$ 0.27 & 67.35 $\pm$ 0.33 & 79.68 $\pm$ 0.12 & 65.02 $\pm$ 0.30 & 68.22 \\
    {\fontsize{6}{7}\selectfont ISAL} & 64.17 $\pm$ 0.23 & 65.02 $\pm$ 0.25 & 83.51 $\pm$ 0.32 & 71.42 $\pm$ 0.35 & 80.41 $\pm$ 0.36 & 71.51 $\pm$ 0.25 & 72.67 \\
    {\fontsize{6}{7}\selectfont CAADA} & {69.75} $\pm$ 0.15 & {67.85} $\pm$ 0.17 & {88.31} $\pm$ 0.12 & {79.61} $\pm$ 0.21 & {92.11} $\pm$ 0.25 & {74.22} $\pm$ 0.35 & {78.64} \\
    {\fontsize{7.5}{7}\selectfont Ours} & \textbf{85.70 $\pm$ 0.12}& \textbf{74.59 $\pm$ 0.18} &  \textbf{93.87 $\pm$ 0.08} & \textbf{87.28 $\pm$ 0.14} & \textbf{94.10 $\pm$ 0.09}& \textbf{94.98 $\pm$ 0.13}& \textbf{88.39} \\
    \bottomrule
    \end{tabular}
\end{table}

\begin{table}[t]
\centering
\caption{Average per-category predicting accuracy (\%) for Category-Aware Active Domain Adaptation on \textit{DomainNet-126} and \textit{VisDA-2017}. \textit{Avg.} column represents the average accuracy across all 6 tasks of \textit{DomainNet-126} in this table.}
\label{tab:domainnet_visda}
\footnotesize
\fontsize{5.2}{7}\selectfont
\setlength{\tabcolsep}{1.5pt}
\begin{tabular}{l|cccccccc}
\toprule
 & \multicolumn{7}{c}{\textit{DomainNet-126}} & \textit{VisDA-2017} \\
\cmidrule(lr){2-8} \cmidrule(lr){9-9}
 & cl$\rightarrow$pt & cl$\rightarrow$rl & cl$\rightarrow$sk & sk$\rightarrow$cl & sk$\rightarrow$pt & sk$\rightarrow$rl & Avg. & S$\rightarrow$R \\
\midrule
{\fontsize{6}{7}\selectfont DANN} & 34.18 $\pm$ 0.37 & 42.15 $\pm$ 0.32 & 36.12 $\pm$ 0.29 & 31.82 $\pm$ 0.41 & 56.59 $\pm$ 0.38 & 47.24 $\pm$ 0.37 & 41.35& 47.84 $\pm$ 0.28 \\
\midrule
{\fontsize{7.5}{7}\selectfont Random} & 36.25 $\pm$ 0.54 & 44.75 $\pm$ 0.43 & 40.74 $\pm$ 0.49 & 34.41 $\pm$ 0.53 & 58.31 $\pm$ 0.39 & 49.53 $\pm$ 0.31 & 44.00 & 54.19 $\pm$ 0.33 \\
{\fontsize{6}{7}\selectfont CLUE} & 38.06 $\pm$ 0.34 & 45.34 $\pm$ 0.36 & 42.28 $\pm$ 0.46 & 35.63 $\pm$ 0.48 & 60.12 $\pm$ 0.44 & 57.37 $\pm$ 0.43 & 46.47 & 58.54 $\pm$ 0.36\\
{\fontsize{6}{7}\selectfont DUC} & 37.53 $\pm$ 0.37 & 48.91 $\pm$ 0.33 & 39.84 $\pm$ 0.29 & 36.12 $\pm$ 0.27 & 61.35 $\pm$ 0.33 & 58.26 $\pm$ 0.39 & 47.02 & 60.12 $\pm$ 0.40\\
{\fontsize{6}{7}\selectfont ISAL} & 45.05 $\pm$ 0.19& 56.94 $\pm$ 0.21 & 51.31 $\pm$ 0.16 & 40.34 $\pm$ 0.26 & 62.27 $\pm$ 0.30 & 61.64 $\pm$ 0.24  & 52.92 & 62.31 $\pm$ 0.17\\
{\fontsize{6}{7}\selectfont CAADA} & {52.23} $\pm$ 0.13 & {64.21} $\pm$ 0.14 & {56.34} $\pm$ 0.16 & {42.49} $\pm$ 0.20 & {67.94} $\pm$ 0.19 & {63.41} $\pm$ 0.21 & {57.78} & {66.68} $\pm$ 0.23 \\
{\fontsize{8}{7}\selectfont Ours} & \textbf{79.53 $\pm$ 0.14} & \textbf{86.85 $\pm$ 0.17} & \textbf{81.13 $\pm$ 0.13} & \textbf{88.17$\pm$ 0.19} & \textbf{81.57 $\pm$ 0.20} & \textbf{86.30 $\pm$ 0.17} & \textbf{83.93} & \textbf{79.37 $\pm$ 0.18} \\
\bottomrule
\end{tabular}
\end{table}

The improvement observed across different source-target pairs indicates that the advantage is not restricted to a particular adaptation task. 
Moreover, the results observed on all benchmarks are consistent with our motivation that positive-manifold coverage can help identify more informative samples within the target category of interest, thereby making more effective use of the limited annotation budget.

\subsection{Ablation study}
\subsubsection{Analysis on the quality of selection samples}
To highlight the improvement of our informative positive manifold coverage strategy, Figure \ref{fig:werding_kde} shows the KDE plot of the behavior of our approach compared with the CAADA method on two categories of the Office-Home dataset.

\begin{figure}[!htb]
    \centering
    \includegraphics[width=\textwidth]{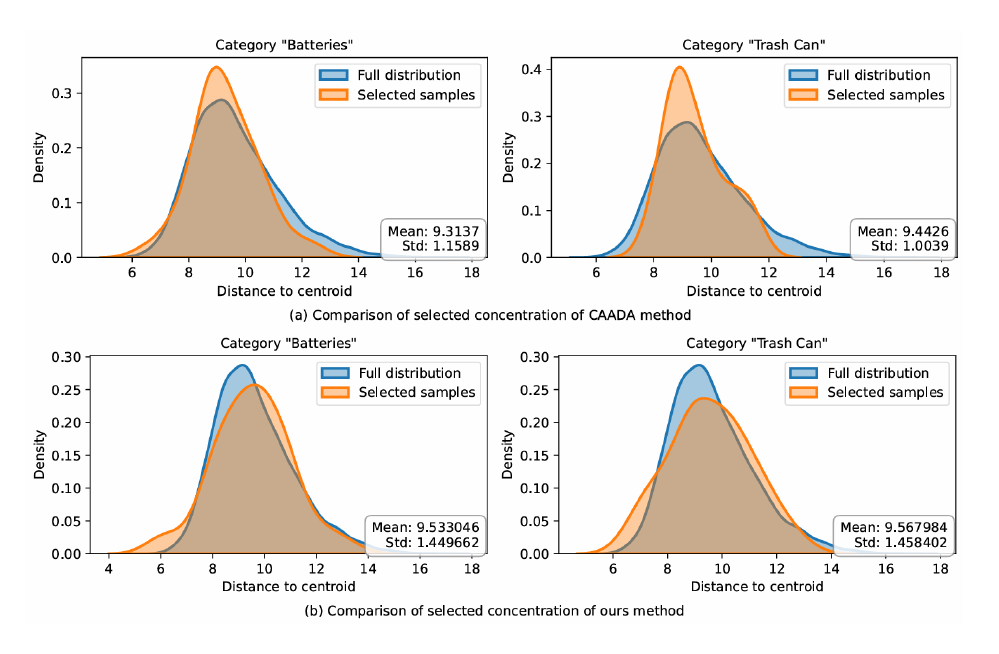}
    \caption{KDE plot for behavior of selection strategy of ours and CAADA in the $2nd$ active round with two categories, "Batteries" and "Trash Can", in the Ar$\rightarrow$Cl task of the OfficeHome dataset.}
    \label{fig:werding_kde}
\end{figure}

Consider the "Batteries" category, the two figures exhibit noticeably different sample selection behaviors.
As discussed in Section \ref{sec3}, the CAADA 
mainly focuses on the high-density center, while distant examples are being ignored.
In contrast, our method shows a substantially broader 
coverage that closely matches the distribution of the full category.
For the "Trash Can" category, the difference between the two methods is even more obvious. 
CAADA concentrates the central region and largely overlooks the distribution tails.
This observation suggests that our approach offers broader category coverage and more efficiently maintains category-specific diversity compared to the point-wise selection behavior.
            
\begin{figure}[!htb]
                \centering
                \includegraphics[width=\textwidth]{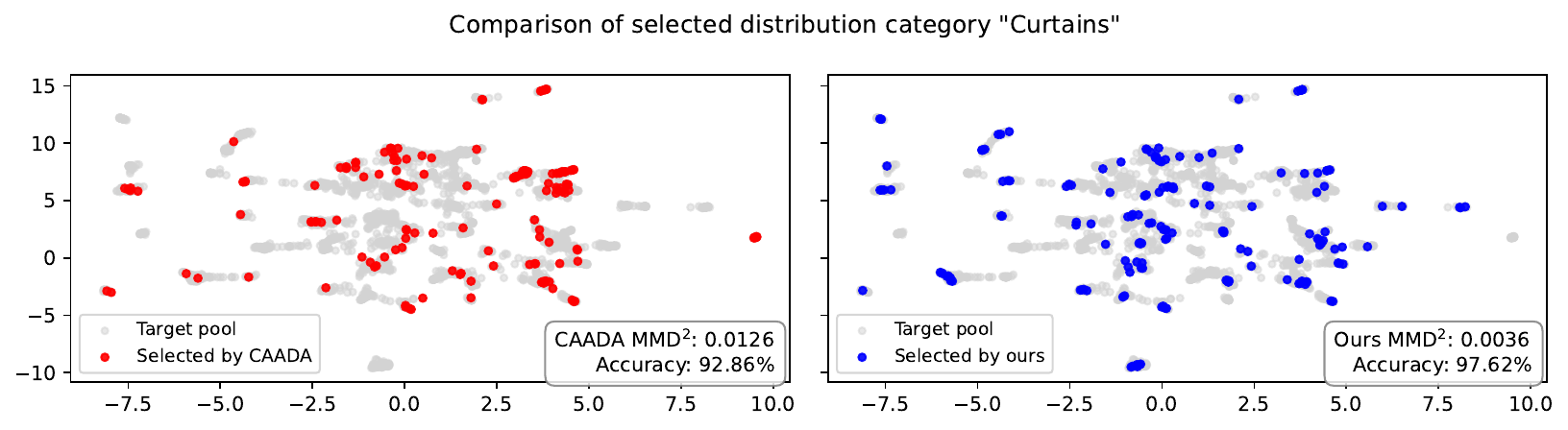}
                \caption{Comparison data distribution between CAADA and our selection method with the "Curtains" category in the Ar$\rightarrow$Cl task on the Office-Home dataset.}
                \label{fig:curtains}
            \end{figure} 
            
In Figure \ref{fig:curtains} we choose the "Curtains" category of the Office-Home benchmark to compare the quality of chosen samples.
Compared with CAADA, our method selects samples that are more uniformly distributed across the target manifold, 
reducing sample redundancy and improving category coverage.
This is reflected in the substantially lower MMD value, leading to a $4.76\%$ performance gain and $97.62\%$ accuracy.

\begin{figure}[!htb]
                \centering
                \includegraphics[width=\textwidth]{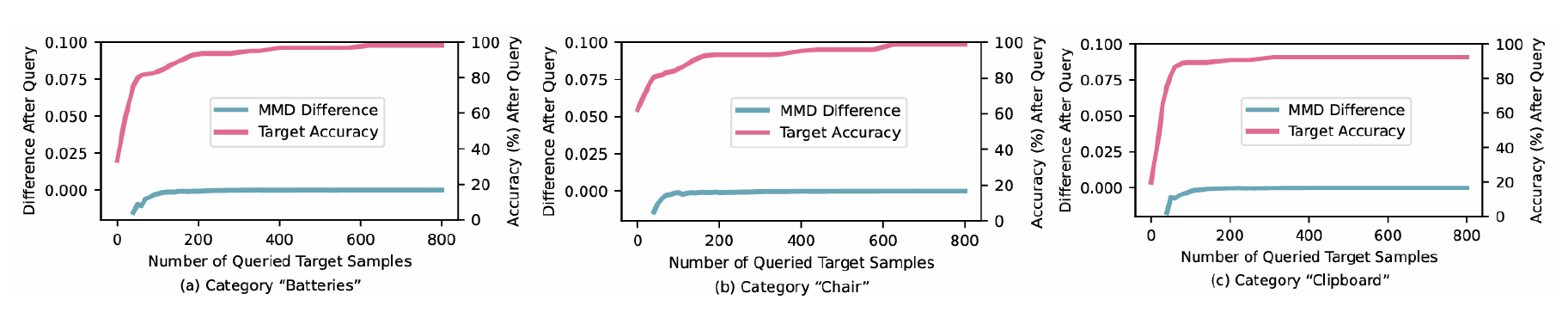}
                \caption{Classification loss on the validation set and predicting accuracy on the target domain in three different categories in the Ar$\rightarrow$Pr task on Office-Home dataset.}
                \label{fig:mmd_acc}
            \end{figure} 
            
We 
demonstrate the convergence of model performance as well as the MMD difference after each query round in Fig. \ref{fig:mmd_acc}.
In this experiment, we choose the budget $\mathcal B=10$ and run with $80$ active rounds.
The MMD difference varies only slightly after 5 rounds and remains nearly unchanged from round 10 onward, whereas the target accuracy steadily increases and eventually converges to its upper limit.

\begin{figure}[!htb]
                \centering
                \includegraphics[width=\textwidth]{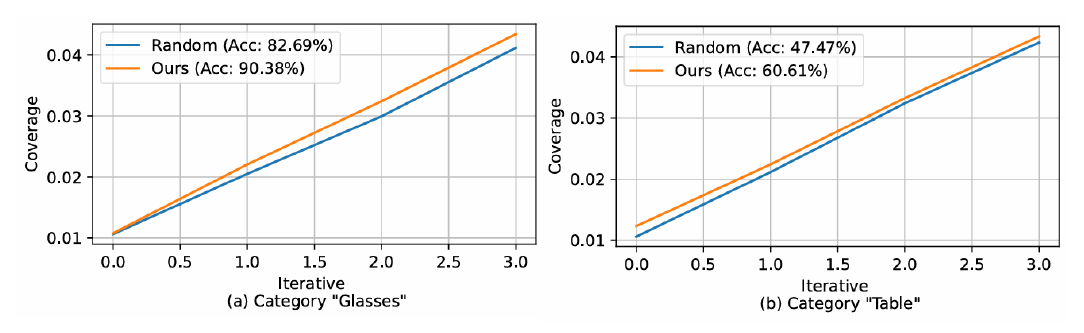}
                \caption{Comparision between the coverage of our proposed method with the random sampling method on the target domain in two different categories in the Ar$\rightarrow$Cl task on Office-Home dataset.}
                \label{fig:coverage_compare}
            \end{figure}

We further compare the coverage achieved by our proposed method with that of random sampling, where samples are selected independently and identically distributed. As shown in Fig. \ref{fig:coverage_compare}, the coverage gap gradually increases over the iterative sampling rounds, indicating that our method selects samples that provide broader coverage of the target category.
This improved coverage contributes to better performance of our proposed method.

\subsubsection{Analysis on contribution of each component}
Table \ref{tab:ablation_components} demonstrates that each component also contributes to the improvement of our model.
Compared with CAADA, our \textbf{Coverage Only} boosts the performance by $6.61\%$ while adding a weighted-region scheme further improves the accuracy by $14.15\%$ with a small standard deviation, about $0.05\%$.
Together, these two modules increase the performance by $20.76\%$ and reduce the deviation compared with the baseline CAADA.
Moreover, using coverage for sample selection significantly reduces the runtime, with \textbf{Coverage Only} and our method achieving runtime reductions of about $94.20\%$ and $87.94\%$, respectively, compared with CAADA.

\begin{table}[t]
\centering
\caption{Accuracy and runtime with different model components for domain adaptation task Rw$\rightarrow$Pr on Office-Home dataset with three different seed runs.}
\label{tab:ablation_components}
\begin{tabular}{lccccc}
\toprule
Model & Coverage & Region Wt. & Acc. (\%) & Time (s) \\
\midrule
CAADA & $\times$ & $\times$ & $74.22 \pm 2.36$ & 11407.54 \\
\midrule
Coverage Only & $\checkmark$ & $\times$ & 80.83 $ \pm $ 0.06 & 664.75 \\
Coverage + Region Wt. & $\checkmark$ & $\checkmark$ & 94.98 $ \pm $ 0.05 &  1376.05 \\

\bottomrule
\end{tabular}
\end{table}

\begin{figure}[!htb]
                \centering
                \includegraphics[width=\textwidth]{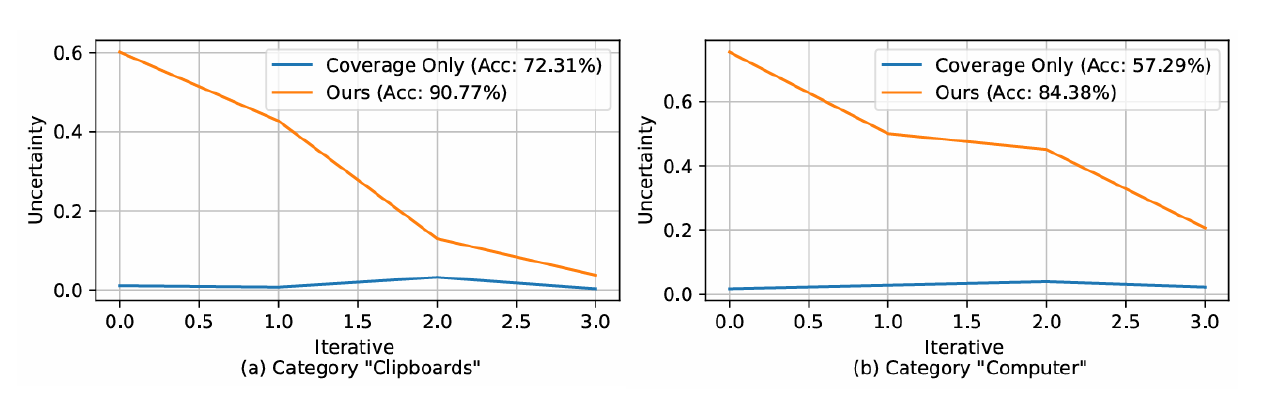}
                \caption{Comparision between the uncertainty of selected samples of our proposed method with the Coverage Only sampling method on the target domain in two different categories in the Rw$\rightarrow$Pr task on Office-Home dataset.}
                \label{fig:uncertainty_compare}
\end{figure}

To further investigate the $14.15\%$ improvement of uncertainty weighting over Coverage Only, we analyze the uncertainty of the samples selected by the two strategies in two different categories, \textit{Clipboard} and \textit{Computer}.
As shown in Fig. \ref{fig:uncertainty_compare}, our method consistently selects samples with substantially higher uncertainty, whereas Coverage Only predominantly selects samples with near-zero uncertainty.

\subsubsection{Analysis on the scoring mechanism}
\begin{wraptable}{r}{0.50\textwidth}
\vspace{-10pt}
\centering
\caption{Accuracy with different combinations of positive and negative manifolds for domain adaptation task Rw$\rightarrow$Pr on the Office-Home dataset.}
\label{tab:ablation_manifold}
\begin{tabular}{lcc}
\toprule
$\alpha$ & $\beta$ & Acc. (\%) \\
\midrule
$1$ & $0$ & 94.98 $ \pm $ 0.01 \\
\midrule
$1$ & $1$ & 94.90 $\pm$ 0.01 \\
$1$ & $-1$ & 94.97 $\pm$ 0.02 \\
$1$ & $-0.5$ & 94.98 $\pm$ 0.05 \\

\bottomrule
\end{tabular}
\vspace{-10pt}
\end{wraptable}

We additionally examine whether incorporating the negative-manifold term into Eq. \ref{eq:score} provides complementary information for target selection.
The detail of our combination can be formulated as follows:
\begin{equation*}
    s(x; f_c) = \alpha \cdot s(x, \mathcal M_c^{+}; f_c) + \beta \cdot \Delta h_c(x; \mathcal M_c^{-})
\end{equation*}
with $\Delta h_c(x; \mathcal M_c^{-})$ is defined as
\[
 \Delta h_c(x; \mathcal M_c^{-})  = \frac{1}{N_{\mathcal{T}_U}} \sum_{i=1}^{N_{\mathcal{T}_U}} \max_{\mathbf{x}' \in \{\mathcal{T}_L \setminus \mathcal M_c^{+}\} \cup \{x\}} h_c(x_i,x') - \frac{1}{N_{\mathcal{T}_U}} \sum_{i=1}^{N_{\mathcal{T}_U}} \max_{\mathbf{x}'\in \{\mathcal{T}_L \setminus \mathcal M_c^{+}\}} h_c(x_i,x')
\]


As shown in Table \ref{tab:ablation_manifold}, the resulting accuracy varies by only approximately $0.07\%$, with no consistent improvement observed when adjusting $\beta$. 
This suggests that the negative manifold provides only marginal improvement.
We therefore set $\beta=0$ and use only the positive manifold in all main experiments.

\section{Conclusion}\label{sec6}
In this paper, we introduced a category-aware active domain adaptation method that improves individual category performance while faithfully reproducing the target distribution of each category.
Motivated by the redundancy and parameter sensitivity of point-wise, influence-based selection, we reformulated the acquisition as a region-based problem and proposed an informative positive manifold coverage score.
By restricting the coverage measure to the positive manifold of the target class and approximating the coverage gain through Monte-Carlo estimation, our selection strategy queries samples that expand the represented region.
We further introduced an uncertainty guidance term that biases the coverage toward regions near the decision boundary, and we proved that this weighting preserves the positive coverage gain up to a multiplicative constant.
Our theoretical analysis explains the local-concentration behaviour of influence-only selection through Lipschitz continuity and formally establishes the redundancy-reduction property of our region-based objective.
Empirical results demonstrate that this method outperforms existing category-aware active domain adaptation baselines and other active domain adaptation methods across various benchmarks, achieving significant performance improvements.
Given the promising results, one possible extension is to include a joint fine-tuning of the extractor and the classifier.
Additionally, a finer sampling method can be used to balance the uncertainty-coverage trade-off in the scoring mechanism.

\backmatter

\begin{appendices}






\end{appendices}

\bibliography{sn-bibliography}

\end{document}